\documentclass[10pt,conference]{IEEEtran}
\IEEEoverridecommandlockouts

\usepackage{cite}
\usepackage{amsthm, amsmath,amssymb,amsfonts}
\usepackage{algorithmic}
\usepackage{graphicx}
\usepackage{textcomp}
\usepackage[T1]{fontenc}
\usepackage{xcolor}
\usepackage{tabularx}
\usepackage{etoolbox}
\usepackage[utf8]{inputenc} 
\usepackage{url}            
\usepackage{booktabs}       
\usepackage{nicefrac}       
\usepackage{microtype}      
\usepackage{longtable}
\usepackage{subcaption}
\usepackage{float} 
\usepackage{hyperref}
\usepackage{url}
\usepackage{placeins}   
\usepackage{dblfloatfix} 
\usepackage{enumitem}
\usepackage{tikz}
\usepackage{multirow}
\usepackage{threeparttable}
\usetikzlibrary{positioning,calc,shapes.geometric,arrows.meta}
\usepackage[nolist]{acronym}

\newcommand{\doublemidrule}{%
  \specialrule{0.4pt}{0pt}{0pt}%
  \specialrule{0.4pt}{1.2pt}{0pt}
}

\begin{acronym}[XXX] 
\acro{ai}[AI]{Artificial Intelligence}
\acro{auc}[AUC]{Area Under the Curve}
\acro{cnn}[CNN]{Convolutional Neural Network}
\acro{dl}[DL]{Deep Learning}
\acro{dnn}[DNN]{Deep Neural Network}
\acro{dnns}[DNNs]{Deep Neural Networks}
\acro{gelu}[GELU]{Gaussian Error Linear Unit}
\acro{glai}[GLAI]{GreenLightningAI}
\acro{kd}[KD]{Knowledge Distillation}
\acro{ml}[ML]{Machine Learning}
\acro{mlp}[MLP]{Multilayer Perceptron}
\acro{mva}[MVA]{Maximum Validation Accuracy}
\acro{mha}[MHA]{Multihead Attention}
\acro{moe}[MoE]{Mixture-of-Experts}
\acro{mimick}[NM]{Neural Mimicking}
\acro{nlp}[NLP]{Natural Language Processing}
\acro{cv}[CV]{Computer Vision}
\acro{nn}[NN]{Neural Network}
\acro{oui}[OUI]{Overfitting-Underfitting Indicator}
\acro{prelu}[PReLU]{Parametric Rectified Linear Unit}
\acro{poc}[PoC]{Proof-of-Concept}
\acro{relu}[ReLU]{Rectified Linear Unit}
\acro{silu}[SiLU]{Sigmoid Linear Unit}
\acro{sgd}[SGD]{Stochastic Gradient Descent}
\acro{svd}[SVD]{Singular Value Decomposition}
\acro{sota}[SOTA]{state-of-the-art}
\acro{vt}[VT]{Vision Transformer}
\acro{vit}[ViT]{Vision Transformer}
\acro{wd}[WD]{Weight Decay}
\end{acronym}

\newtheorem{proposition}{Proposition}

\theoremstyle{definition}
\newtheorem{definition}{Definition}
\theoremstyle{remark}
\newtheorem{remark}{Remark}
\newtheorem{corollary}{Corollary}

\definecolor{c_train}{RGB}{31,119,180}   
\definecolor{c_val}{RGB}{255,127,14}     
\definecolor{c_hamm}{RGB}{44,160,44}     
\definecolor{c_wdist}{RGB}{214,39,40} 

\def\BibTeX{{\rm B\kern-.05em{\sc i\kern-.025em b}\kern-.08em
    T\kern-.1667em\lower.7ex\hbox{E}\kern-.125emX}}
\begin{document}

\title{Regime Change Hypothesis: Foundations for Decoupled Dynamics in Neural Network Training
}

\author{\IEEEauthorblockN{ Cristian Pérez-Corral}
\IEEEauthorblockA{\textit{Dept. of Computer Engineering} \\
\textit{Universitat Politècnica de València}\\
Valencia, Spain \\
cpercor@upv.es}
\and
\IEEEauthorblockN{ Alberto Fernández-Hernández}
\IEEEauthorblockA{\textit{Dept. of Computer Engineering} \\
\textit{Universitat Politècnica de València}\\
Valencia, Spain \\
a.fernandez@upv.es}
\and
\IEEEauthorblockN{Jose I. Mestre}
\IEEEauthorblockA{\textit{Dept. of Computer Engineering} \\
\textit{Universitat Jaume I}\\
Castellón, Spain \\
jmiravet@uji.es}
\and
\IEEEauthorblockN{Manuel F. Dolz}
\IEEEauthorblockA{\textit{Dept. of Computer Engineering} \\
\textit{Universitat Jaume I}\\
Castellón, Spain \\
dolzm@uji.es}
\and
\IEEEauthorblockN{Jose Duato}
\IEEEauthorblockA{
\textit{Openchip \& Software Technologies}\\
Barcelona, Spain \\
jose.duato@openchip.es}
\and
\IEEEauthorblockN{Enrique S. Quintana-Ortí}
\IEEEauthorblockA{\textit{Dept. of Computer Engineering} \\
\textit{Universitat Politècnica de València}\\
Valencia, Spain \\
enquior@upv.es}
}

\maketitle

\begin{abstract}
Despite the empirical success of \acp{dnn}, their internal training dynamics remain difficult to characterize. In \acs{relu}-based models, the activation pattern induced by a given input determines the piecewise-linear region in which the network behaves affinely. Motivated by this geometry, we investigate whether training exhibits a two-timescale behavior: an early stage with substantial changes in activation patterns and a later stage where weight updates predominantly refine the model within largely stable activation regimes. We first prove a local stability property: outside measure-zero sets of parameters and inputs, sufficiently small parameter perturbations preserve the activation pattern of a fixed input, implying locally affine behavior within activation regions. We then empirically track per-iteration changes in weights and activation patterns across fully-connected and convolutional architectures, as well as Transformer-based models, where activation patterns are recorded in the ReLU feed-forward (MLP/FFN) submodules, using fixed validation subsets. Across the evaluated settings, activation-pattern changes decay 3 times earlier than weight-update magnitudes, showing that late-stage training often proceeds within relatively stable activation regimes. These findings provide a concrete, architecture-agnostic instrument for monitoring training dynamics and motivate further study of decoupled optimization strategies for piecewise-linear networks. For reproducibility, code and experiment configurations will be released upon acceptance.
\end{abstract}

\begin{IEEEkeywords}
Activation patterns, ReLU neural networks, Learning dynamics, Regime change hypothesis, Piecewise-linear geometry, Convergence analysis.
\end{IEEEkeywords}

\section{Introduction}
\ac{dl} has achieved remarkable success over the pasts years, establishing itself as the state-of-the-art paradigm across a wide range of domains, including \ac{cv} and \ac{nlp}, among many others~\cite{lecun2015deep}. This progress builds upon decades of theoretical and algorithmic development, from early connectionist models~\cite{rosenblatt1958perceptron}, through recurrent architectures~\cite{hochreiter1997long}, to convolutional networks for visual recognition~\cite{lecun1998mnist} and to the recent dominance of Transformer-based architectures~\cite{vaswani2017attention}.

\acp{mlp} have played a key role in this progress and constitute a fundamental building block of most deep learning architectures. They are known as universal approximators of continuous nonlinear functions~\cite{hornik1989multilayer}, which is achieved through the composition of linear and nonlinear transformations. When activation functions are piecewise, the network’s expressivity stems from the combinatorial arrangement of activations, which partitions the input space into distinct regions, each exhibiting different functional behavior~\cite{raghu2017expressive}. 

Building on recent insights, we hypothesize that knowledge within \ac{relu} \acsp{mlp} does not evolve uniformly throughout training. Hartmann et al.~\cite{hartmann} analyzed how activation-pattern statistics (e.g., similarity measures) evolve over optimization and how they relate to the network's training dynamics. Along related lines, Duato et al.~\cite{decoupling} argued that it can be useful to distinguish between two complementary aspects of learning in \acs{relu} \acs{mlp}: \textit{structural knowledge}, associated with the activation masks that determine the partition of the input space into piecewise-affine regions, and \textit{quantitative knowledge}, associated with the parameter values that specify the affine mapping within each region. Taken together, these observations motivate the working hypothesis that learning dynamics may admit a decoupled, two-stage interpretation, where activation regimes stabilize earlier while parameters continue to refine within those regimes.

In this work, we provide a local stability result for ReLU activation patterns under small parameter perturbations and complement it with an empirical analysis across architectures and datasets. While local stability does not imply global convergence under \ac{sgd}/Adam, our results motivate a two-stage interpretation of training dynamics in which activation regimes stabilize earlier than weights. The contributions of this paper are threefold:
\begin{enumerate}
 \item \textbf{Local stability of activation patterns in \acs{relu} \acsp{mlp}}. We formalize a measure-theoretic statement showing that, for almost all parameter configurations and inputs, sufficiently small parameter perturbations preserve the activation pattern of a fixed input. This can be extended straightforwardly to piecewise-linear operations on CNNs. 
 \item \textbf{A practical instrumentation of ``activation regime'' dynamics during training}. We introduce a simple per-iteration tracking protocol that compares weight-update magnitudes against activation-pattern changes, enabling side-by-side convergence profiling across models.

 \item \textbf{Empirical evidence across architectures under controlled training}. We report a multi-architecture study on \acp{mlp}, \acp{cnn} and Transformer-based models with \acs{relu} non-linearities showing that activation-pattern changes typically decay earlier than weight-update magnitudes under the evaluated settings, thus motivating a two-stage regime-change viewpoint of training.
\end{enumerate}
The remainder of the paper is organized as follows. Section~\ref{sec:related} reviews related work on network structure, activation patterns, and interpretability. Section~\ref{sec:theory} introduces the definitions and theoretical results underlying our approach. Section~\ref{sec:experiments} presents the experimental evaluation across different configurations. Finally, Section~\ref{sec:conc} concludes the paper and discusses future research directions.

\section{Related Work}
\label{sec:related}
The expressive power of \acs{relu}-based \acsp{dnn} has been a central topic of study over the past decade~\cite{montufar2014number,raghu2017expressive}. One of the main approaches to understanding this connects the success of learning with how the piecewise-linear transformations induced by \acs{relu} activations partition the input space into distinct linear regions~\cite{montufar2014number}. The number and geometry of these regions are bounded by the network’s architecture and provide insight into how data points are clustered and separated in the learned representation space. Although the partition is implicitly governed by activation sign patterns, comparatively fewer works study their dynamics directly. Xu et al.~\cite{xu2024127174} introduced a novel idea of convergence in neural networks, implying that, once the activations are fixed, the neural networks converge as an infinite product of matrices. Hartmann et al.~\cite{hartmann} measured activation statistics such as entropy and similarity in \acs{relu}-based \acsp{dnn}, offering one of the first systematic analyses of activation pattern convergence. Fernández-Hernández et al. ~\cite{fernándezhernández2025ouineedtalkweight} introduced a metric that quantifies how activation patterns across different samples diverge during training, whose behavior correlates with overfitting and underfitting, thus providing a principled way to determine the optimal weight decay value for a given network and dataset. Montavon et al.~\cite{montavon2018methods} proposed an interpretability framework based on studying which features maximize neuron activations, enabling a more fine-grained understanding of feature specialization. 
 Activation patterns also play a role in interpretability, due to its relation to the input space partition. Lingyang et al.~\cite{lingyang} proposed an interpretability method based on the tessellation of the input space into polyhedra, relying on the linearity on each of those for full interpretable methods. Xu et al.~\cite{xu2022traversinglocalpolytopesReLU} study how subtle changes in activation can change the region of interest, providing a sensibility analysis valuable in explainability research.

These studies highlight that activation patterns encode essential information about how neural networks transform and structure their input space, yet the question of how and when these patterns stabilize during training remains open (a gap this work addresses). Duato et al.~\cite{decoupling}, was one of the earliest to propose that activations and weights can be trained in separate regimes, motivating the viewpoint studied here.

\section{Theoretical foundations}
\label{sec:theory}
This section provides the theoretical background necessary to support the results presented in this paper. 
We briefly include the definition of \ac{mlp} used in this work, to unify notation and establish a consistent foundation for the concepts introduced in this section.

\begin{definition}\label{def:mlp}
A \textbf{\acf{mlp}} with $\mathrm{ReLU}$ activations and $L$ hidden layers is a mapping
$f(\,\cdot\,;w):\mathbb{R}^{n_0}\to\mathbb{R}^{n_{L+1}}$ defined as follows.
Let $h_0(x;w)=x$ and, for $l=1,\dots,L$, define the \emph{pre-activations}
\[
z_l(x;w)=W_{l-1}h_{l-1}(x;w)+b_{l-1}\in\mathbb{R}^{n_l},
\]
and the \emph{activations} $h_l(x;w)=\sigma(z_l(x;w))$ with $\sigma(t)=\max(t,0)$ applied elementwise.
The output layer is affine:
\[
f(x;w)=W_L h_L(x;w)+b_L.
\]
\end{definition}

A pivotal definition in this mathematical framework is that of activation pattern, as it will form the basis for the subsequent construction.

\begin{definition}\label{def:activation_pattern}
Let $f(\cdot;w)$ be an MLP as in Definition~\ref{def:mlp}. For a fixed input $x\in\mathbb{R}^{n_0}$,
the neuron $(l,i)$ is \textbf{active} if $z_l(x;w)_i>0$ and \textbf{inactive} otherwise.
The \textbf{activation pattern} is
\(
\mathrm{act}(x;w)=(\mathrm{act}_1(x;w),\dots,\mathrm{act}_L(x;w))\), where \(
\mathrm{act}_l(x;w)=\mathbf{1}\{z_l(x;w)>0\}\in\{0,1\}^{n_l}.
\)
\end{definition}

The activation pattern \(\mathrm{act}(x; w)\) determines the polyhedral region of the input space \(\mathbb{R}^{n_0}\) that contains \(x\). Moreover, \(\mathrm{act}(x; w)\) is not unique to the single sample \(x\): it is shared by all points \(x'\) lying in that same polyhedral region, so every point in the region exhibits the same activation pattern.

Each of these regions corresponds to a convex polyhedron defined by the intersection of half-spaces determined by the ReLU activation states~\cite{montufar2014number}. This geometric structure underlies the expressive power of ReLU-based DNNs~\cite{raghu2017expressive}. While the existence and characterization of these polyhedral regions are well established in the literature, comparatively less attention has been given to understanding the sensitivity and stability of their boundaries; specifically, how small perturbations in parameters or inputs can lead to transitions across adjacent activation regions. 

Here, we formalize this perspective and develop it as a analytical framework to support our subsequent hypotheses on activation pattern dynamics and training stability. Thus, the following proposition establishes that activation patterns in ReLU networks exhibit a form of local stability. Although it is well known that ReLU networks are continuous piecewise-affine and restricted to an affine map in each linear region, we also provide an affine-region corollary to fix notation and make explicit the measure-zero exclusions, which will be assumed in our experiments. To the best of our knowledge, this statement is not commonly spelled out in the measure-theoretic form we use here (with explicit null sets in both parameter and input space); we therefore provide proofs for completeness.

\begin{proposition}[Local stability of activation patterns]\label{theorem:measure_constant}
Let $f(x; w)$ be a ReLU-based MLP on $\mathbb{R}^{n_0}$ with parameters $w \in \mathbb{R}^p$.
Then there exists a set $B \subset \mathbb{R}^p$ of measure zero with the property that
every $w_0 \in \mathbb{R}^p \setminus B$ admits a measure-zero set
$Z \subset \mathbb{R}^{n_0}$ for which the following holds:
for all $x \in \mathbb{R}^{n_0} \setminus Z$ one can find $\varepsilon > 0$ such that
whenever $w \in \mathbb{R}^p$ satisfies $\| w - w_0 \| < \varepsilon$, the activation
pattern of $x$ is identical under $w_0$ and $w$.
\end{proposition}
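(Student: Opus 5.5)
The plan is to reduce stability to a \emph{nondegeneracy} condition: no pre-activation vanishes. Then I show that this condition fails only on a null set of the joint space $\mathbb{R}^{n_0}\times\mathbb{R}^p$, and pass to sections with Fubini--Tonelli.

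\textbf{Step 1 (continuity in $w$ gives local stability).} For fixed $x$, each pre-activation $w\mapsto z_l(x;w)_i$ is continuous. It is a composition of affine maps, products of parameters with continuous functions, and the continuous map $\sigma$. Suppose $z_l(x;w_0)_i\neq 0$ for every neuron $(l,i)$. There are finitely many neurons, so continuity yields $\varepsilon>0$ such that $\|w-w_0\|<\varepsilon$ implies $\operatorname{sign} z_l(x;w)_i=\operatorname{sign} z_l(x;w_0)_i$ for all $(l,i)$. In particular $\mathbf{1}\{z_l(x;w)>0\}=\mathbf{1}\{z_l(x;w_0)>0\}$, so $\mathrm{act}(x;w)=\mathrm{act}(x;w_0)$ by Definition~\ref{def:activation_pattern}. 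Hence it suffices to take
\[
Z_{w_0}=\{x\in\mathbb{R}^{n_0}:\ z_l(x;w_0)_i=0 \text{ for some } (l,i)\}
\]
and to show that $Z_{w_0}$ is null for all $w_0$ outside a null set $B$.

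\textbf{Step 2 (the joint degenerate set is null).} Let $S=\bigcup_{(l,i)} S_{l,i}$ with $S_{l,i}=\{(x,w): z_l(x;w)_i=0\}$. Each $S_{l,i}$ is the zero set of a continuous function on $\mathbb{R}^{n_0+p}$, so it is closed and hence measurable. To show $S_{l,i}$ is null, single out the bias coordinate $\beta=(b_{l-1})_i$ and write
\[
z_l(x;w)_i=\langle (W_{l-1})_{i,:},\,h_{l-1}(x;w)\rangle+\beta .
\]
The key observation is that $h_{l-1}(x;w)$ depends only on $W_0,b_0,\dots,W_{l-2},b_{l-2}$, and in particular not on $\beta$. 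So once $x$ and all parameters other than $\beta$ are fixed, the section of $S_{l,i}$ in the $\beta$-coordinate is the single point $\beta=-\langle (W_{l-1})_{i,:},h_{l-1}(x;w)\rangle$, which has one-dimensional measure zero. Tonelli's theorem, integrating first over $\beta$, gives $\lambda^{n_0+p}(S_{l,i})=0$. A finite union of null sets is null, so $\lambda(S)=0$.

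\textbf{Step 3 (Fubini in the other order).} Apply Tonelli to $\mathbf{1}_S$, now integrating first over $x$. This gives $\int_{\mathbb{R}^p}\lambda^{n_0}(Z_{w})\,dw=\lambda(S)=0$, so the set $B=\{w:\lambda^{n_0}(Z_w)>0\}$ is measurable and has measure zero. For $w_0\notin B$, take $Z=Z_{w_0}$ and conclude with Step~1.

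The step that needs the most care is Step~2. The naive approach would fix $w_0$ and try to show directly that $\{x: z_l(x;w_0)_i=0\}$ is null. That fails for degenerate parameters: for example, a neuron with zero incoming weights and zero bias, or a neuron whose upstream layer is entirely dead on an open set, so that $z$ is constant there. Such $w_0$ are exactly what $B$ must absorb. The bias-coordinate argument handles these cases uniformly, provided one checks that $\beta$ is a free coordinate not influencing $h_{l-1}$, and that the sets involved are Borel so that Tonelli applies. If the MLP had no biases, I would instead use a coordinate of the row $(W_{l-1})_{i,:}$. In that case the argument would require $h_{l-1}(x;w)\neq 0$ off a null set, which needs an extra induction, so the presence of biases in Definition~\ref{def:mlp} is what keeps the argument short.
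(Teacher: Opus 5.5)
Your proof is correct, but it takes a different route from the paper's.

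\textbf{The paper's route.} The paper fixes $w_0$ first. It defines $B$ explicitly as the set of parameters for which some pre-activation vanishes identically on a nonempty open set of inputs. For $w_0\notin B$, it shows each zero set $\{x : z_l(x;w_0)_i=0\}$ is null via a finite polyhedral decomposition of $\mathbb{R}^{n_0}$ on whose pieces $z_l(\cdot;w_0)_i$ is affine; a piece with vanishing slope and intercept would contradict $w_0\notin B$. It then proves $B$ null by writing $z_l(x;w_0)_i=\theta\cdot\hat\phi(x,u)$, with $\theta$ the neuron's full incoming weight-and-bias vector. Since $\theta$ must be orthogonal to the span of $\hat\phi(x,u)$ over the open set (a nontrivial span thanks to the appended constant $1$), Fubini over the $\theta$-sections finishes the argument.

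\textbf{Your route.} You kill the degenerate set in the joint space $\mathbb{R}^{n_0}\times\mathbb{R}^p$ using only the bias coordinate, and recover $B=\{w:\lambda(Z_w)>0\}$ by Tonelli in the other order. Both arguments rest on the same fact: the bias enters with coefficient one and does not affect $h_{l-1}$, which is exactly the role of the constant $1$ in $\hat\phi$.

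\textbf{What each route buys.} Yours never uses piecewise linearity in $x$. It therefore carries over unchanged to convolutional layers with biases and to any continuous activation (reading patterns as sign patterns). Measurability also comes for free, because $S$ is closed. By contrast, the paper's step ``$w_0\in B_{l,i}$ iff $\theta\in H(u)$'' is only literally right for a fixed $\mathcal{U}$, whereas $\mathcal{U}$ depends on $w_0$. Making it rigorous needs a reduction to countably many open sets (e.g.\ rational balls) and a measurability check of $B_{l,i}$; your argument sidesteps both.

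In exchange, the paper's route gives an explicit, checkable description of the exceptional parameters. It also shows that $Z$ is a finite union of pieces of hyperplanes, which the corollary reuses. For ReLU networks the two exceptional sets in fact coincide: $\lambda(Z_{w_0})>0$ forces, through the same polyhedral decomposition, some pre-activation to vanish on the interior of a piece.
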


\begin{proof}
    Let us assume that $w_0 \in \mathbb{R}^p$ satisfies the following condition: for every neuron $(l, i)$, with $l \in \{1, \ldots, L \}$ and $i \in \{1, \ldots, n_l\}$, the corresponding pre-activation function is nonzero on every non-empty open subset of $\mathbb{R}^{n_0}$. In other words, if
    \begin{multline*}
    B = \{w_0 \in \mathbb{R}^p \mid \exists (l,i) \text{ and } \exists \,\mathcal{U} \subseteq \mathbb{R}^{n_0} \\ \text{ open and non-empty s.t. }  z_l(x; w_0)_i = 0 \, \forall x \in \mathcal{U}\},
    \end{multline*}
    then let $w_0 \in \mathbb{R}^p \setminus B$. It turns out that $B$ is a null subset of $\mathbb{R}^p$, although we defer its proof until the end for the sake of clarity.

    For a fixed $w_0$, the function $x \mapsto z_l(x; w_0)_i$ is continuous and piecewise linear in $x$. Hence, there exists a finite collection of closed polyhedra $C_1, \ldots, C_M \subseteq \mathbb{R}^{n_0}$, each defined by affine inequalities ($\le$ or $\ge$), with nonempty and pairwise disjoint interiors, such that $C_1 \cup \ldots \cup C_M = \mathbb{R}^{n_0}$, and for every $x \in C_m$ one has
    \(
    z_l(x; w_0)_i = a_{m,l,i} \cdot x + b_{m,l,i},
    \)
    for some $a_{m,l,i} \in \mathbb{R}^{n_0}$ and $b_{m,l,i} \in \mathbb{R}$ depending on $w_0$.  For each neuron $(l,i)$, let the zero level set of its pre-activation be defined as $Z_{l,i} = \{x \in \mathbb{R}^{n_0} \mid z_l(x; w_0)_i = 0\}$. For every $m \in \{1, \ldots, M\}$, it follows that
    \[
    Z_{l,i} \cap C_m = \{x \in C_m \mid a_{m,l,i} \cdot x + b_{m,l,i} = 0\}.
    \]
    Taking this into account, three cases can be distinguished according to the coefficients of the above equation:
    \begin{enumerate}
        \item If $a_{m,l,i} \ne 0$, the set $Z_{l,i} \cap C_m$ is contained in an $(n_0-1)$-dimensional affine subspace and therefore has measure zero.
        \item If $a_{m,l,i} = 0$ and $b_{m,l,i} \ne 0$, the intersection $Z_{l,i} \cap C_m$ is empty.
        \item If $a_{m,l,i} = 0$ and $b_{m,l,i} = 0$, the equality $Z_{l,i} \cap C_m = C_m$ holds, implying that $\forall x \in \operatorname{int}(C_m)$, $z_l(x; w_0)_i = 0$, with $\operatorname{int}(C_m)$ an open nonempty subset of $\mathbb{R}^{n_0}$, which contradicts the initial assumption.
    \end{enumerate}

    Since $C_1 \cup \ldots \cup C_M = \mathbb{R}^{n_0}$, it follows that $Z_{l,i} = \bigcup_{m=1}^M (Z_{l,i} \cap C_m)$ is a finite union of $(n_0-1)$-dimensional polyhedra and is therefore null in $\mathbb{R}^{n_0}$. Consequently, the set $Z = \bigcup_{l = 0}^{L-1} \bigcup_{i=1}^{n_l} Z_{l,i}$ is also null.

    Let a sample $x \in \mathbb{R}^{n_0} \setminus Z$ be fixed. For each neuron $(l,i)$, the value $z_l(x; w_0)_i$ is nonzero, and hence its sign is either positive or negative. As the mapping $\mathbb{R}^p \rightarrow \mathbb{R}$ given by
    \(    w \longmapsto z_l(x; w)_i
    \)
    is continuous at $w_0$, it follows that there exists $\varepsilon_{l,i} > 0$ such that, for all $w \in \mathbb{R}^p$ satisfying $\|w - w_0\| < \varepsilon_{l,i}$, the same inequality $f_l(x; w)_i > 0$ (or $< 0$) holds. As a result, the pre-activation of each neuron at input $x$ preserves its sign in a neighborhood of $w_0$, and consequently, the corresponding activation state of the neuron $(l,i)$ for the sample $x$ remains unchanged. Let $\varepsilon := \min\{\varepsilon_{l,i} \mid (l,i)\} > 0$. Then, for every $w \in \mathbb{R}^p$ satisfying $\|w - w_0\| < \varepsilon$, the activation pattern of the sample $x$ remains identical under parameters $w$ and $w_0$. This completes the first part of the proof.
    
    It remains to show that
    \begin{multline*}
    B = \{w_0 \in \mathbb{R}^p \mid \exists (l,i) \text{ and } \exists \,\mathcal{U} \subseteq \mathbb{R}^{n_0} \\ \text{ open and non-empty s.t. } z_l(x; w_0)_i = 0 \ \forall x \in \mathcal{U}\}
    \end{multline*}
    is a null subset of $\mathbb{R}^p$. Let us define
    \begin{multline*}
    B_{l,i} = \{w_0 \in \mathbb{R}^p \mid \exists\, \mathcal{U} \subseteq \mathbb{R}^{n_0} \\ \text{ open and non-empty s.t. } z_l(x; w_0)_i = 0 \ \forall x \in \mathcal{U}\}.
    \end{multline*}
    Clearly $B = \bigcup_l \bigcup_i B_{l,i}$, so it suffices to show that each $B_{l,i}$ is null in $\mathbb{R}^p$.  
    Decompose the parameters as $w_0 = (u, \theta)$, where $\theta \in \mathbb{R}^d$ collects the weights and bias of neuron $(l,i)$, and $u \in \mathbb{R}^{p-d}$ corresponds to the remaining parameters. Since the outputs of layer $l-1$ depend only on $u$, they can be written as $\phi(x, u)$.  
    Define $\hat{\phi}(x, u) = (\phi(x, u), 1)$, noting that the last component corresponds to the bias term. Then,
    \(
    f_l(x; w_0)_i = \theta \cdot \hat{\phi}(x, u).
    \) Assume now that $w_0 \in B_{l,i}$. Then there exists an open nonempty set $\mathcal{U} \subseteq \mathbb{R}^{n_0}$ such that
    \[
    0 = z_l(x; w_0)_i = \theta \cdot \hat{\phi}(x, u), \qquad \forall x \in \mathcal{U}.
    \]
    Hence, $\theta$ is orthogonal to the vector space $V(u) = \operatorname{span}\{\hat{\phi}(x, u) \mid x \in \mathcal{U}\}$.  
    Since $\hat{\phi}(x, u) \neq 0$ for all $x \in \mathcal{U}$, it follows that $\dim(V(u)) \ge 1$ in $\mathbb{R}^d$, and thus $\theta \in H(u) := V(u)^\perp$.  
    In fact, $w_0 \in B_{l,i}$ if and only if $\theta \in H(u)$. As $\dim(H(u)) \le d - 1$, the set $\{\theta \mid w_0 \in B_{l,i}\}$ has measure zero.  
    By Fubini’s theorem,
    \[
    \mu(B_{l,i}) = \int_{\mathbb{R}^{p-d}} \mu(\{\theta \mid w_0 \in B_{l,i}\}) \, du
     = 0,
    \]
    and therefore $B_{l,i}$ is null in $\mathbb{R}^p$, as claimed.
\end{proof}

\begin{remark}
The measure-zero exclusions ($w\notin B$, $x\notin Z$) are essential: for fixed $w$, activation changes occur only on a finite union of codimension-one polyhedral facets in $x$ (hence measure zero), so patterns are locally stable for almost every input. Degenerate weights (e.g., $w=0$ in $\mathrm{ReLU}(w\!\cdot\! x)$) make pre-activations vanish on regions with interior, so any perturbation flips the pattern; such singular $w$ form a measure-zero set.
\end{remark}

\begin{corollary}
    \label{prop:local_affine}
Let $f(\,\cdot\,; w)$ be a ReLU-based MLP on $\mathbb{R}^{n_0}$ with parameters
$w \in \mathbb{R}^p$. There exists a measure-zero set $F \subset \mathbb{R}^{n_0}$
such that, for every $x \in \mathbb{R}^{n_0} \setminus F$, one can find
$\varepsilon > 0$, a matrix $K \in \mathbb{R}^{n_{L+1} \times n_0}$, and a vector
$c \in \mathbb{R}^{n_{L+1}}$ satisfying
\begin{equation}
    f(x'; w) = K x' + c, \quad
\forall x' \in \mathbb{R}^{n_0} \text{ s.t. } \|x' - x\| < \varepsilon.
\end{equation}

\end{corollary}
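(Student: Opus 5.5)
The plan is to fix $w$ and show that, off a null set of inputs, the activation pattern $\mathrm{act}(\cdot\,;w)$ is constant on a small ball around $x$. On such a ball the network collapses to a composition of affine maps with fixed diagonal masks. The argument parallels the first half of the proof of Proposition~\ref{theorem:measure_constant}, but with the roles of $x$ and $w$ swapped: continuity is now used in $x$ at fixed $w$. Since the corollary imposes no genericity assumption on $w$, I will not rely on $w\notin B$. Instead I will handle degenerate neurons directly.

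\textbf{Step 1 (the exceptional set).} For each neuron $(l,i)$, the map $g_{l,i}(x)=z_l(x;w)_i$ is continuous and piecewise affine. As in the proof of Proposition~\ref{theorem:measure_constant}, there are closed polyhedra $C_1,\dots,C_M$ with nonempty, pairwise disjoint interiors covering $\mathbb{R}^{n_0}$, on each of which $g_{l,i}$ is affine. I will set
\[
F_{l,i}=\partial\{x\mid g_{l,i}(x)>0\}
\]
and take $F=\bigcup_{l,i}F_{l,i}$. By continuity of $g_{l,i}$, the set $\{g_{l,i}>0\}$ is open, so its boundary is contained in $Z_{l,i}=\{g_{l,i}=0\}$.

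\textbf{Step 2 ($F$ is null).} This is the main obstacle, because $w$ is arbitrary and $g_{l,i}$ may vanish on sets with interior. I will intersect $F_{l,i}$ with each $C_m$ and reuse the three-case analysis from the proof of Proposition~\ref{theorem:measure_constant}.
\begin{enumerate}
\item If $a_{m,l,i}\neq 0$, then $F_{l,i}\cap C_m$ lies in a hyperplane.
\item If $a_{m,l,i}=0$ and $b_{m,l,i}\neq0$, then $F_{l,i}\cap C_m$ is empty.
\item If $a_{m,l,i}=0$ and $b_{m,l,i}=0$, then $g_{l,i}$ vanishes on $C_m$. Any point of $\operatorname{int}(C_m)$ has a neighbourhood inside $\{g_{l,i}\le 0\}$, so it is not in the boundary. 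Hence $F_{l,i}\cap C_m\subseteq\partial C_m$, which is a finite union of pieces of hyperplanes.
\end{enumerate}
In every case the piece is null. Since there are finitely many $m$ and finitely many neurons, $F$ is null.

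\textbf{Step 3 (local constancy and the affine formula).} Fix $x\notin F$. For each neuron, $x$ lies either in the open set $\{g_{l,i}>0\}$ or in the interior of its complement, because $x$ is not a boundary point. Hence there is a radius $\varepsilon_{l,i}>0$ on which $\mathrm{act}_l(\cdot\,;w)_i$ is constant. Let $\varepsilon=\min_{l,i}\varepsilon_{l,i}>0$. On the ball of radius $\varepsilon$ around $x$ the pattern equals a fixed $s=(s_1,\dots,s_L)$. Write $D_l=\operatorname{diag}(s_l)$. Then $h_l=D_l z_l$ there, and unrolling Definition~\ref{def:mlp} gives
\[
f(x';w)=W_L D_L W_{L-1}D_{L-1}\cdots D_1W_0\,x' + c
\]
for all $x'$ with $\|x'-x\|<\varepsilon$. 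The constant $c$ collects the terms $W_L D_L\cdots D_{k+1}W_k D_k b_{k-1}$ for $k=1,\dots,L$, together with $b_L$. This identifies $K$ and $c$ and completes the plan.

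If $w\notin B$, Steps 1–2 can be shortened: take $F=Z$ from the proof of Proposition~\ref{theorem:measure_constant}. Continuity of $x\mapsto z_l(x;w)_i$ at a point where every pre-activation is nonzero then gives local constancy directly. The boundary argument above is what makes the corollary hold for every $w$.
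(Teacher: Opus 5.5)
Your argument is correct, and it differs from the paper's mainly in the choice of exceptional set.

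The paper takes $F=\partial C_1\cup\dots\cup\partial C_M$, where the $C_m$ are polyhedral pieces on which $f(\cdot\,;w)$ is affine. Any $x\notin F$ lies in some $\operatorname{int}(C_m)$, so affinity on a ball is immediate, with $K$ and $c$ read off from that piece's coefficients. No case analysis or degeneracy discussion is needed, because the literal statement only concerns $f$.

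You instead remove, neuron by neuron, the boundary of the positivity set $\{z_l(\cdot\,;w)_i>0\}$. This costs you the three-case nullity argument; your case 3, which places the boundary inside $\partial C_m$ when the pre-activation vanishes on a whole piece, is the new ingredient relative to the Proposition. In return you get the stronger conclusion that the activation pattern itself is locally constant off $F$. That is exactly what licenses the formula $K=W_LD_L\cdots D_1W_0$ with the masks taken from $x$'s own pattern.

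The paper also writes this formula, but justifies it by asserting that the pattern is constant on $\operatorname{int}(C_m)$. That does not follow from the $C_m$ merely being linearity pieces. Take one hidden layer with $W_0=(1,-1)^\top$, $W_1=(1,-1)$ and zero biases. Then the hidden pre-activations and $f(x)=\sigma(x)-\sigma(-x)=x$ are all affine on $\mathbb{R}$, so $M=1$ and $F=\emptyset$ is admissible. Yet at $x=0$ both neurons are inactive, and the mask formula gives $K=0$ instead of $1$. Your $F$ contains $0$ here.

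So your route is the one that actually supports the explicit $K$ and $c$. The first half of the paper's proof, affinity on a ball inside $\operatorname{int}(C_m)$, already suffices for the existence statement as posed.
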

\begin{proof}
    Following the proof of Theorem~\ref{theorem:measure_constant}, fix $w \in \mathbb{R}^p$. There exists a finite collection of closed polyhedra $C_1, \ldots, C_M \subseteq \mathbb{R}^{n_0}$ with nonempty and pairwise disjoint interiors such that $\bigcup_{i} C_i = \mathbb{R}^{n_0}$, where for every $x \in C_m$ one has \(f(x;w)_j=a_{m,j}\cdot x + b_{m,j}\), with $a_{m,j}, b_{m,j}$ dependent on $w$. Let $F=\partial C_1 \cup \dots \cup \partial C_M$, which clearly has measure zero in $\mathbb{R}^{n_0}$. Now, take $x \in \mathbb{R}^{n_0} \setminus F$ such that $x \in C_m$ for some $m$, and therefore $x \in \operatorname{int}(C_m)$. Therefore, exists $\varepsilon > 0$ such that $\forall x'$ satisfying $\|x'-x\|< \varepsilon$ lies in $\operatorname{int}(C_m)$, thus each neuron is described in $x'$ by the same affine expression \(f(x';w)_i = a_{m,i}\cdot x' + b_{m,i}\). Let $\mathrm{act}(x;w)=(\mathrm{act}_1(x;w),\dots,\mathrm{act}_L(x;w))$ denote the activation pattern of $x$ and define
    $D_l=\mathrm{diag}(\mathrm{act}_l(x;w))\in\mathbb{R}^{n_l\times n_l}$ for $l=1,\dots,L$.
    Since $x\in \operatorname{int}(C_m)$, there exists $\varepsilon>0$ such that
    every $x'$ with $\|x'-x\|<\varepsilon$ lies in $\operatorname{int}(C_m)$, therefore $\mathrm{act}(x;w) = \mathrm{act}(x';w)$ and thus induces the same diagonal masks $D_1,\dots,D_L$.
    Therefore, on this neighborhood the network reduces to an affine map
    $f(x';w)=Kx'+c$, where
    \begin{equation*}
    K \;=\; W_L D_L W_{L-1} D_{L-1}\cdots W_1 D_1 W_0
    \end{equation*}
    and
    \begin{equation*}
    c \;=\; b_L \;+\; \sum_{k=0}^{L-1}
    \Big( W_L D_L W_{L-1} D_{L-1}\cdots W_{k+1} D_{k+1} \Big)\, b_k,
    \end{equation*}
    and the result follows. 
\end{proof}

\begin{remark}
The above results extend to convolutional \acs{relu} networks. After a standard linear unfolding of input patches (e.g., \texttt{im2col}), a convolution is represented by multiplication with a structured matrix (Toeplitz/block-Toeplitz), so the activation boundaries remain defined by linear equations and have measure zero. Thus Theorem~\ref{theorem:measure_constant} and Corollary~\ref{prop:local_affine} carry over to standard CNN layers.
\end{remark}

These results support a local ``freezing'' intuition: for almost every input, activation patterns are stable under sufficiently small parameter perturbations. We refer to the empirical manifestation of this effect as the \textbf{regime change hypothesis}: activation regimes tend to stabilize earlier than weights, even while weights continue to adapt within the stabilized regimes (under the evaluated settings). From this point, a \emph{purely \acs{relu}-based} network (e.g., \acsp{mlp} and \acsp{cnn} composed of piecewise-linear operations) can be effectively approximated, in a neighborhood of the data, by a collection of affine operators corresponding to fixed activation regions, with the set of ReLU gates frozen. For Transformers, we track activation masks only in the ReLU FFN/MLP submodules; attention (softmax) lies outside this piecewise-linear analysis. Section~\ref{sec:experiments} quantifies this separation by tracking activation-pattern changes and weight-update magnitudes throughout training.

\section{Experiments}
\label{sec:experiments}

Once the main theoretical definitions and results have been introduced, this section aims to empirically evaluate the claim that, in general, the set of activation patterns stabilizes before the set of network parameters. 

To this end, our goal is not to achieve state-of-the-art performance, but rather to assess the behavior of standard training processes. 
We therefore do not include certain common techniques, such as learning rate scheduling or label smoothing among others, that primarily enhance validation accuracy. Instead, we use simple, schedule-free baselines, but we sweep optimizers/hiperparameters to avoid drawing conclusions from clearly undertrained runs; we report the best-validation configuration per pair. 
Three categories of experiments are performed: \textit{(i)} MLP experiments, evaluating the hypothesis on tabular, sequential, and image datasets; \textit{(ii)} CNN experiments, analyzing similar behavior only on image datasets; and \textit{(iii)} Transformer experiments, examining whether the hypothesis holds when MLPs are embedded within larger architectures. In the Transformer setting, activation patterns are recorded only for the ReLU feed-forward (MLP/FFN) blocks. Every dataset follows the standard train-val split. 

\begin{table}[htpb]
\centering
\caption{MLPs used in the experiments with the configuration of each model specifying the input size (IS), number of classes (NC), hidden dimensions (HD), batch normalization (BN), and dropout (D).}
\label{tab:configs}
\begin{tabular}{lccccc}
\toprule
                    & IS & NC & HD & BN & D \\ \midrule
MLP - Adult         & 104        & 2                 & (256, 128, 64)    & Yes                 & Yes     \\
MLP - Breast Cancer & 30         & 2                 & (64, 32)          & Yes                 & Yes     \\
MLP - Har           & 561        & 6                 & (512, 256)        & Yes                 & No      \\
MLP - MNIST         & 784        & 10                & (256, 128)        & Yes                 & Yes     \\
\bottomrule
\end{tabular}
\end{table}

Within each category, multiple experiments were conducted with different configurations, as summarized below. Each network was trained from scratch, initialized with default settings. 
Early stopping was used in all experiments, with a patience of a third of the total number of epochs and a minimum improvement threshold of $1\%$ for MLPs and CNNs, and $0\%$ for Transformers. 
The training configurations performed are the following.

\begin{itemize}
    \item \textbf{MLPs and LeNet5:} trained for up to 30 epochs using SGD and Adam optimizers, with $\text{learning rate} \in \{10^{-2}, 10^{-3}, 10^{-4}\}$ and $\text{weight decay} \in \{10^{-3}, 10^{-4}\}$. MLPs configurations are provided in Table~\ref{tab:configs}.
    \item \textbf{AlexNet, ResNet34, and EfficientNetB0:} trained for up to 200 epochs using SGD (with and without Nesterov momentum), Adam, and AdamW, with $\text{learning rate} \in \{10^{-2}, 3\cdot10^{-2}, 6\cdot10^{-2}, 10^{-3}, 10^{-4}\}$ and $\text{weight decay} \in \{10^{-3}, 5\cdot10^{-3}, 10^{-4}\}$.
    \item \textbf{Transformers:} trained for up to 50 epochs using SGD, Adam, and AdamW. In most cases, AdamW achieved superior results, as is standard for Transformer architectures. The hyperparameters used were $\text{learning rate} \in \{5\cdot10^{-4}, 3\cdot10^{-4}, 10^{-4}, 2\cdot10^{-5}\}$ and $\text{weight decay} \in \{5\cdot10^{-2}, 10^{-2}\}$.
\end{itemize}

All experiments were conducted using Python~3.10 and PyTorch~2.6.  For all networks, the \acs{relu} activation function was employed, either as specified in the original architecture or by replacing the default activation with ReLU as in EfficientNetB0. For Transformer-based models, the replacement is applied to the feed-forward (MLP/FFN) non-linearities; as specified.

Experiments were performed on a workstation equipped with an NVIDIA~A100 GPU (40~GB~VRAM), a 16-core AMD~EPYC~7282 processor, and 128~GB of system memory. To ensure reproducibility, random seeds were fixed across all relevant libraries.

Table~\ref{tab:experiments_comparison} summarizes the results of our experiments, reporting for each model--dataset pair the optimizer, weight decay (WD), and learning rate (LR) that achieved the best validation score (BVS), together with the epoch at which early stopping was triggered. Training and validation scores correspond to accuracy for all models except GPT-2, for which they reflect perplexity. To quantify \emph{weight convergence}, let $w_i\in\mathbb{R}^{p}$ denote the vector of all trainable parameters after processing the $i$-th training batch. We measure the (normalized) average absolute parameter update between consecutive iterations as
\begin{equation}
\Delta w_i \;=\; \frac{1}{p}\,\lVert w_i - w_{i-1}\rVert_{1}
\;=\; \frac{1}{p}\sum_{j=1}^{p}\bigl|(w_i)_j-(w_{i-1})_j\bigr|,
\label{eq:dw}
\end{equation}
which in practice is computed layerwise by taking elementwise absolute differences of each weight tensor and averaging them. To quantify \emph{activation-pattern convergence}, for each layer $l\in\{1,\dots,L\}$ we use the binary activation mask $\mathrm{act}_l(x;w)\in\{0,1\}^{n_l}$ from Definition~2 and evaluate it on a fixed subset $S$ of validation samples of size $|S|= 0.2\times \text{(batch size)}$, fixed through the whole training process. The per-batch Hamming-type change is computed as
\begin{equation}
\begin{split}
\Delta a_i &= \frac{1}{L}\sum_{l=1}^{L}\Delta a_i^{(l)}, 
\\
\Delta a_i^{(l)} &= \frac{1}{|S|\,n_l}\sum_{x\in S}\bigl\lVert \mathrm{act}_l(x;w_i)-\mathrm{act}_l(x;w_{i-1})\bigr\rVert_{1},
\label{eq:da}
\end{split}
\end{equation}
\textit{i.e.}, the average fraction of bits that flip between consecutive batches (equivalently, the normalized Hamming distance since the masks are binary). These activation masks are computed in eval mode with dropout disabled and BN using running stats. This yields two trajectories, $\{\Delta w_i\}_{i=1}^{T}$ and $\{\Delta a_i\}_{i=1}^{T}$, with $T$ denoting the number of batches. To reduce noise and highlight long-term trends, we smooth each per-batch trajectory using a simple moving average (SMA) implemented as a 1D convolution with a uniform kernel. The smoothing window is set to $30\%$ of the number of batches per epoch. Since weight- and activation-change curves live on different numerical scales, we apply a robust min-max normalization based on percentiles $0.5^{\text{th}}$ and $99.5^{\text{th}}$. This normalization enables within-run trend comparison across scales, so $\mathrm{AUC}$ and $\Delta$ are interpreted as relative cumulative-change proxies rather than direct stabilization times.
Finally, because the batch index is uniformly spaced, we summarize the overall amount of change by the mean value of the normalized curve, $\mathrm{AUC}(x)=\frac{1}{T}\sum_{i=1}^{T}\hat{x}_i$ (equivalently, a discretized area under the curve up to a constant factor). We report $\mathrm{AUC}_W=\mathrm{AUC}(\Delta w)$ for weights and $\mathrm{AUC}_P=\mathrm{AUC}(\Delta a)$ for activation patterns, and define the relative speedup as
$\rho=\mathrm{AUC}_W/\mathrm{AUC}_P$; in this setting, smaller AUC indicates faster stabilization, as it reflects lower cumulative change across training. To support reproducibility, code and full experiment configurations will be released upon acceptance.

As shown in Table~\ref{tab:experiments_comparison}, a consistent trend emerges: for most models, activations exhibit lower cumulative change and typically reach a low-change regime earlier (see threshold analysis/curves) compared with weights, with an average acceleration of $3.85\times$ and a positive and negative standard deviation of $6.43$ and $2.03$, respectively. The only exception is the MLP trained on MNIST, for which activations exhibit slightly slower convergence.

This behaviour can be interpreted by examining the curves shown in Figure~\ref{fig:mlps}, which presents the convergence trajectories for MLP, CNNs and Transformers experiments across the different combinations of model and dataset. For the MLP trained on MNIST, the delayed activation convergence is attributable to substantial fluctuations in activation differences during the initial training batches. In all other experiments, the acceleration is pronounced, typically representing more than a threefold improvement, which highlights the earlier stabilization of activation patterns relative to weights. Overall, the results across all networks are positive. With the exception of MNIST, the MLPs exhibit a clear earlier convergence of activation patterns relative to the weights. This is also observed in the CNNs, where this behaviour is most evident for EfficientNet-B0. The Transformers follow the same trend; implying that this trend appears even if the monitored network is embedded in a larger system.

\begin{table*}[htpb]
\centering
\caption{Comparisons between activation patterns convergence and weight convergence for different models and datasets. Ep. refers to the early stopping epoch. BVS refers to the best validation score of each model -- dataset. $\rho$ refers to the convergence speedup factor between $\mathrm{AUC}_W$ and $\mathrm{AUC}_P$.}
\label{tab:experiments_comparison}
\begin{threeparttable}
\begin{tabular*}{\textwidth}{@{\extracolsep{\fill}}cccccccccc}
\toprule
\multirow{2}{*}{Model} & \multirow{2}{*}{Dataset} & \multirow{2}{*}{Optim.} & \multirow{2}{*}{WD} & \multirow{2}{*}{LR} & \multirow{2}{*}{V.S.} & \multirow{2}{*}{Ep.} & \multirow{2}{*}{$\mathrm{AUC}_W$} & \multirow{2}{*}{$\mathrm{AUC}_P$} & \multirow{2}{*}{$\rho$} \\

                       &                          &                         &                     &                     &                                      &                      &                          &                          &                           \\ \midrule
MLP                    & Adult                    & Adam                    & $10^{-4}$           & $10^{-2}$           & 0.86                                 & 15                   & 0.32                     & 0.30                     & \textbf{1.07}$\times$     \\
MLP                    & Breast Cancer            & SGD                     & $10^{-4}$           & $10^{-2}$           & 0.97                                 & 18                   & 0.32                     & 0.26                     & \textbf{1.23}$\times$     \\
MLP                    & Har                      & SGD                     & $10^{-4}$           & $10^{-2}$           & 0.97                                 & 15                   & 0.20                     & 0.19                     & \textbf{1.05} $\times$    \\
MLP                    & MNIST                    & SGD                     & $10^{-4}$           & $10^{-2}$           & 0.94                                 & 15                   & 0.29                     & 0.41                     & 0.71 $\times$             \\ \doublemidrule
LeNet5                 & MNIST                    & SGD                     & $10^{-4}$           & $10^{-2}$           & 0.99                                 & 25                   & 0.63                     & 0.20                     & \textbf{3.15}$\times$     \\
LeNet5                 & CIFAR10                  & Adam                    & $10^{-4}$           & $10^{-3}$           & 0.64                                 & 29                   & 0.67                     & 0.20                     & \textbf{3.35}$\times$     \\
AlexNet                & CIFAR100                 & SGD                     & $10^{-3}$           & $10^{-3}$           & 0.50                                 & 146                  & 0.80                     & 0.22                     & \textbf{3.6}$\times$      \\
ResNet34               & CIFAR100                 & Adam                    & $10^{-4}$           & $10^{-2}$           & 0.51                                 & 101                  & 0.74                     & 0.66                     & \textbf{1.12}$\times$     \\
ResNet34               & TinyImageNet             & SGD\tnote{a}     & $5\cdot10^{-3}$     & $10^{-3}$           & 0.39                                 & 64                   & 0.63                     & 0.49                     & \textbf{1.3}$\times$      \\
EfficientNetB0         & CIFAR100                 & AdamW                   & $10^{-3}$           & $2\cdot 10^{-2}$    & 0.51                                 & 159                  & 0.58                     & 0.18                     & \textbf{3.2}$\times$      \\
EfficientNetB0         & TinyImageNet             & AdamW                   & $10^{-3}$           & $2\cdot 10^{-2}$    & 0.44                                 & 113                  & 0.42                     & 0.21                     & \textbf{2}$\times$        \\ \doublemidrule
XLNet                  & SST-2                    & AdamW                   & $10^{-2}$           & $2\cdot 10^{-5}$    & 0.83                                 & 4                    & 0.75                     & 0.44                     & \textbf{1.7}$\times$      \\
RoBERTa                & AG News                  & AdamW                   & $10^{-2}$           & $2\cdot 10^{-5}$    & 0.92                                 & 5                    & 0.77                     & 0.22                     & \textbf{3.5}$\times$      \\
T5                     & SQuAD                    & AdamW                   & $10^{-2}$           & $ 10^{-4}$          & 0.93                                 & 6                    & 0.87                     & 0.38                     & \textbf{2.3}$\times$      \\
GPT-2                  & WikiText                 & AdamW                   & $10^{-2}$           & $ 3 \cdot 10^{-4}$  & 24\tnote{b}  & 6                    & 0.97                     & 0.08                     & \textbf{12}$\times$       \\
ViT-B16      & CIFAR100                 & AdamW                   & $5\cdot 10^{-2}$    & $ 10^{-4}$          & 0.61                                 & 50                   & 0.87                     & 0.08                     & \textbf{11}$\times$       \\
Swin                   & Food-101                 & AdamW                   & $5\cdot 10^{-2}$    & $ 10^{-4}$          & 0.72                                 & 50                   & 0.86                     & 0.12                     & \textbf{7.2} $\times$     \\
PyramidViT             & TinyImageNet             & AdamW                   & $5\cdot 10^{-2}$    & $ 5 \cdot 10^{-4}$  & 0.55                                 & 50                   & 0.69                     & 0.07                     & \textbf{9.9}$\times$      \\ \bottomrule
\end{tabular*}

\begin{tablenotes}[flushleft]
\footnotesize
\item[a] Stochastic gradient descent (SGD) with Nesterov momentum set to 0.9.
\item[b] GPT-2 reports perplexity as V.S., not accuracy.
\end{tablenotes}
\end{threeparttable}
\end{table*}

\begin{figure*}[htpb]
\centering
\includegraphics[width=\textwidth]{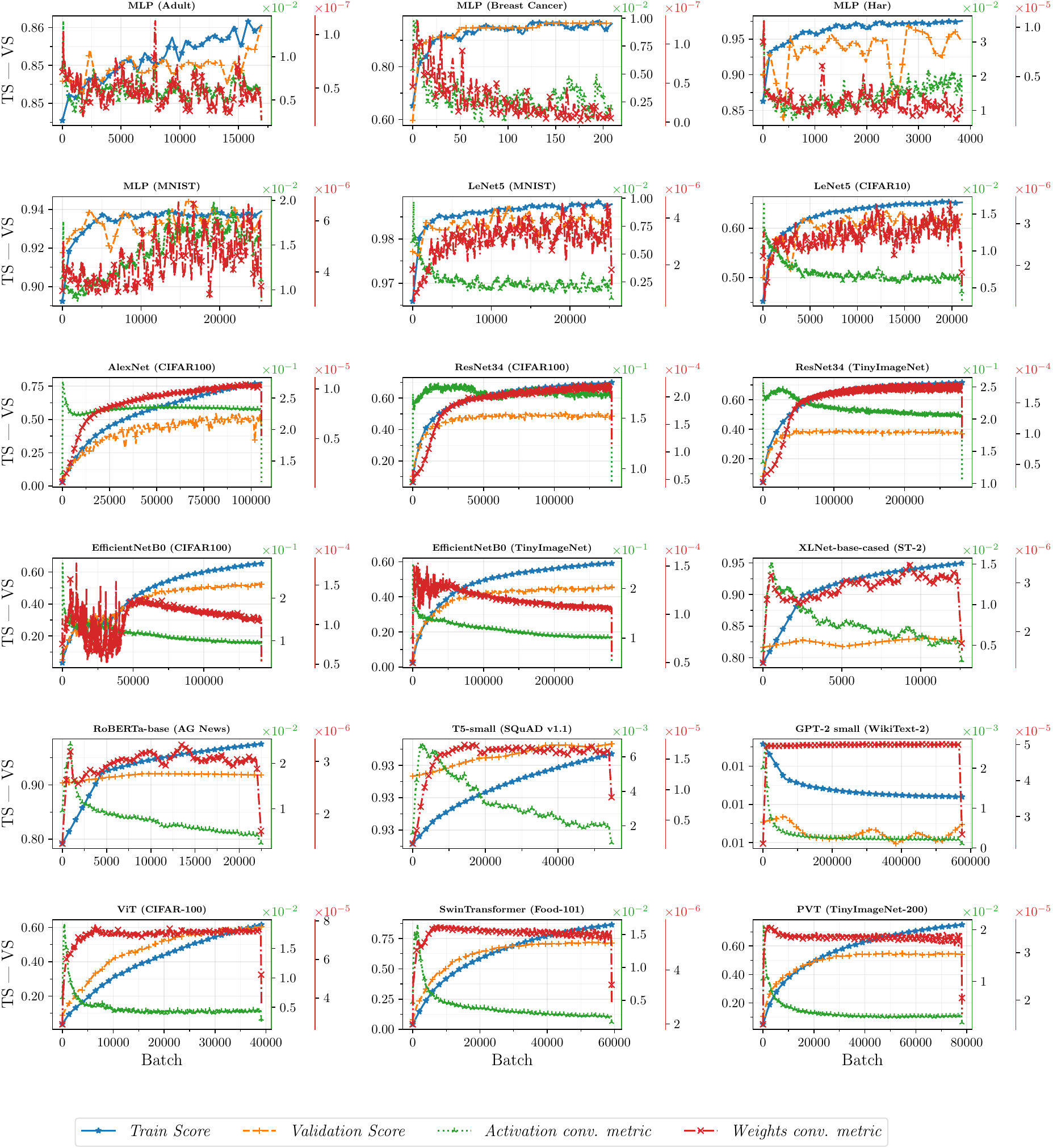}

\caption{Model curves showing the \textcolor{c_train}{training} and \textcolor{c_val}{validation} scores (scale at $y$-left axis), as well as \textcolor{c_hamm}{activation} and \textcolor{c_wdist}{weight} convergences (scale at $y$-right axis), for all model–dataset pairs. The train/validation score (TS/VS) corresponds to train/validation accuracy for all models except GPT-2, for which it reflects perplexity. Activation and weights convergence are measured as described before.}
\label{fig:mlps}
\end{figure*}

\section{Conclusion and Future Work}
\label{sec:conc}
In this work, we have presented both theoretical and empirical results that advance the understanding of convergence dynamics in neural networks, with a particular focus on the relationship between parameter and activation convergence. Concretely, we established that activation patterns remain stable under sufficiently small perturbations of the network parameters. Building on this foundation, we proposed the \textit{regime change hypothesis}, which claims that, during training, activation-pattern changes typically stabilize earlier than weight-update magnitudes (under the evaluated settings). Also, we have provided extensive experimental validation of this hypothesis across a diverse set of architectures, including MLPs, CNNs, and Transformers, consistently supporting our claims. Our theoretical results formalize a local stability property: under well-defined conditions, network weights may continue to evolve while activation patterns only change residually. This observation naturally motivates the regime change hypothesis, which is not formally proven yet the experimental evidence demonstrates that, in the vast majority of training scenarios, activation patterns stabilize significantly earlier than the parameters. The identification of two distinct phases of training (an initial phase in which activation patterns stabilize, followed by a phase of weight refinement with residual changes in the activations) opens several promising directions for future research. This conceptual separation suggests that training could be decomposed into two simpler sub-processes, rather than treated as a single, monolithic optimization task, potentially accelerating convergence and reducing computational costs. This is a promising line of research that remains largely underexplored, with few approximations in the state-of-the-art. It will be interesting to extend the proposed analysis beyond \acs{relu}-based networks. In particular, it remains open how analogous ``regime-change'' behavior manifests in architectures with smooth nonlinearities (e.g., \acs{gelu} or \acs{silu}), and in standard Transformer models where such activations are the default choice. As on going work, we are working on defining comparable activation-regime metrics in these settings and whether similar two-timescale dynamics can be observed; a comprehensive study of non-\acs{relu} networks is therefore left as ongoing work and future research. Moreover, this rezsults carry important implications for interpretability. Once activation patterns are fixed, a purely \acs{relu}-based network can be expressed as an explicit collection of affine transformations over activation regions, each of which is potentially more interpretable and analyzable. In summary, the results presented here establish a theoretical and empirical foundation for viewing neural network training as a two-phase process. By explicitly treating training in this manner, future work may develop more efficient, interpretable, and distributed training strategies, further advancing our understanding of how deep networks learn and generalize.
\section*{Acknowledgment}
This research was funded by the projects PID2023-146569NB-C21 and PID2023-146569NB-C22 supported by MICIU/AEI/10.13039/501100011033 and ERDF/UE. Cristian Pérez-Corral received support from the \textit{Conselleria de Educación, Cultura, Universidades y Empleo} (reference CIACIF/2024/412) through the European Social Fund Plus 2021–2027 (FSE+) program of the \textit{Comunitat Valenciana}. Alberto Fernández-Hernández was supported by the predoctoral grant PREP2023-001826 supported by MICIU/AEI/10.13039/501100011033 and ESF+. Jose I. Mestre was supported by the predoctoral grant ACIF/2021/281 of the \emph{Generalitat Valenciana}. Manuel F. Dolz was supported by the Plan Gen--T grant CIDEXG/2022/13 of the \emph{Generalitat Valenciana}
\bibliographystyle{IEEEtran}

\bibliography{bibliography}

\vspace{12pt}
\clearpage
\end{document}